%auto-ignore
\documentclass[10pt,journal,compsoc]{IEEEtran}
\usepackage{subfigure}
\usepackage{tikz}
\tikzset{global scale/.style={
    scale=#1,
    every node/.append style={scale=#1}
  }
}
\usepackage{romannum}
\usepackage{amsmath}
\usepackage{amssymb}
\usepackage{amsthm}
\usepackage[ruled,vlined,linesnumbered]{algorithm2e}
\usepackage{cancel}
\usepackage{multirow}
\usepackage{stfloats}
\usepackage{booktabs}
\usepackage{makecell}
\usepackage{cite}
\usepackage{xr}
\usepackage{color}
\usepackage{xcolor}
\usepackage[colorlinks = true,
            linkcolor = blue,
            urlcolor  = blue,
            citecolor = blue,
            anchorcolor = blue]{hyperref}
\newcommand{\changreflcolor}[1]{\hypersetup{linkcolor=#1}}   

\usepackage{enumitem}
\usepackage{floatrow} %table title top
\floatsetup[table]{capposition=top}
\captionsetup[table]{justification=centerlast,
                     labelsep=newline,
                     font=sf,
                     textfont=footnotesize} 
 
\usepackage{mathtools,amssymb,lipsum}

\usepackage{cuted}
\setlength\stripsep{3pt plus 1pt minus 1pt}

% \newcommand*{\addFileDependency}[1]{% argument=file name and extension
%   \typeout{(#1)}
%   \@addtofilelist{#1}
%   \IfFileExists{#1}{}{\typeout{No file #1.}}
% }
% \makeatother

% \newcommand*{\myexternaldocument}[1]{%
%     \externaldocument{#1}%
%     \addFileDependency{#1.tex}%
%     \addFileDependency{#1.aux}%
% }

\usepackage{xr}
\makeatletter
\newcommand*{\addFileDependency}[1]{% argument=file name and extension
  \typeout{(#1)}
  \@addtofilelist{#1}
  \IfFileExists{#1}{}{\typeout{No file #1.}}
}
\makeatother

\newcommand*{\myexternaldocument}[1]{%
    \externaldocument{#1}%
    \addFileDependency{#1.tex}%
    \addFileDependency{#1.aux}%
}

\myexternaldocument{main}

% \ifCLASSOPTIONcompsoc
%   % IEEE Computer Society needs nocompress option
%   % requires cite.sty v4.0 or later (November 2003)
%   \usepackage[nocompress]{cite}
% \else
%   % normal IEEE
%   \usepackage{cite}
% \fi

\ifCLASSINFOpdf
\else
\fi

\hyphenation{op-tical net-works semi-conduc-tor}

\begin{document}

\newtheorem{definition}{Definition}
\newtheorem{lemma}{Lemma}
\newtheorem{proposition}{Proposition}
\newtheorem{theorem}{Theorem}
\newtheorem{corollary}{Corollary}
\newtheorem*{conjecture}{Conjecture}
\newtheorem{example}{Example}
\newtheorem{remark}{Remark}
\newtheorem{observation}{Observation}

\newenvironment{proofsketch}{%
  \renewcommand{\proofname}{Proof Sketch}\proof}{\endproof}

\newcommand{\RNum}[1]{\lowercase\expandafter{\romannumeral #1\relax}}
\newcommand{\RNumUpper}[1]{\uppercase\expandafter{\romannumeral #1\relax}}

\setcounter{equation}{0}
\renewcommand{\theequation}{S\arabic{equation}}

\setcounter{figure}{0}
\renewcommand{\thefigure}{S\arabic{figure}}

\setcounter{lemma}{0}
\renewcommand{\thelemma}{S\arabic{lemma}}

\setcounter{proposition}{0}
\renewcommand{\theproposition}{S\arabic{proposition}}

\setcounter{definition}{0}
\renewcommand{\thedefinition}{S\arabic{definition}}

\setcounter{table}{0}
\renewcommand{\thetable}{S\arabic{table}}

\setcounter{example}{0}
\renewcommand{\theexample}{S\arabic{example}}

%\title{Invitation in Crowdsourcing Contests: Graph based Bayesian Game for Effort Elicitation}
\title{Incorporating Social Network Structure into Crowdsourcing Contest Design\\ (Technical Appendix)}
 
\author{Qi~Shi,~\IEEEmembership{Student Member,~IEEE,}
        Dong~Hao,~\IEEEmembership{Member,~IEEE}
        %John~Doe,~\IEEEmembership{Fellow,~OSA,}
        %and~Jane~Doe,~\IEEEmembership{Life~Fellow,~IEEE}% <-this % stops a space
 
%\thanks{Manuscript received April 19, 2005; revised August 26, 2015.}
}
 
\markboth{IEEE/ACM Transactions on Networking}%
{Shell \MakeLowercase{\textit{et al.}}: Bare Demo of IEEEtran.cls for Computer Society Journals}
 
\IEEEtitleabstractindextext{%

}

% make the title area
\maketitle

\IEEEdisplaynontitleabstractindextext
 
\IEEEpeerreviewmaketitle

% \IEEEraisesectionheading{\section{Introduction}\label{sec:introduction}}

% \IEEEraisesectionheading{\section{Introduction}\label{sec:introduction}} 
\appendices 

\section{Invitation  Contest Model}
\subsection{Leading Relation and Leader Set}
The invitation graph is a connected graph constructed with invited agents to be the nodes and invitations to be the edges. As Definition \ref{df:critical referrer} shows, if an agent $j$ is led by another agent $i$, then every path from the principal  $p$ to the agent node $j$ passes the agent node $i$ in the invitation graph.

The leading relation is asymmetric and transitive. That is, if $i$ leads $j$, then $j$ does not lead $i$; if $i$ leads $j$ and $j$ leads $k$, then $i$ leads $k$. More formally, when $i$ leads $j$, we have $i \in \mathbf{C}_j$, $j \in \mathbf{D}_i$ and $\mathbf{D}_j \subset \mathbf{D}_i$. Moreover, we have the following proposition about leading relation.

\begin{proposition}
\label{app_critical list}
  For an arbitrary agent $i$ with more than one leader (i.e. $|\mathbf{C}_i| > 1$), for any $j, k \in \mathbf{C}_i(j \neq k)$, there must be either $j$ leads $k$ or $k$ leads $j$ (i.e. $j \in \mathbf{C}_k \text{ or } k \in \mathbf{C}_j$).
\end{proposition}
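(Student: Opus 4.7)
The plan is to prove this via a path-surgery argument exploiting the cut-vertex characterization implicit in Definition~\ref{df:critical referrer}. Recall that $j \in \mathbf{C}_i$ means every path from $p$ to $i$ in the invitation graph $\mathcal{H}$ must traverse $j$; equivalently, $j$ separates $p$ from $i$ in $\mathcal{H}$. So it suffices to reason about path existence in $\mathcal{H}$.

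First, I would fix an arbitrary simple path $P$ from $p$ to $i$ in $\mathcal{H}$; such a path exists because $i$, being invited, is connected to $p$. Since both $j, k \in \mathbf{C}_i$ are leaders of $i$, the path $P$ must contain both $j$ and $k$ as interior vertices. As $P$ is simple, $j$ and $k$ appear in a definite order along $P$; without loss of generality assume $j$ occurs before $k$ when traversing $P$ from $p$ to $i$. My claim is then that $j$ leads $k$, i.e., $j \in \mathbf{C}_k$.

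The core step is a proof by contradiction. Suppose $j$ does not lead $k$. Then there is a simple path $Q$ from $p$ to $k$ in $\mathcal{H}$ that avoids $j$. Let $R$ denote the suffix of $P$ from $k$ to $i$; because $P$ is a simple path with $j$ appearing strictly before $k$, the suffix $R$ avoids $j$ as well. Concatenating $Q$ and $R$ produces a walk from $p$ to $i$ that never visits $j$. Any walk contains a simple path between its endpoints (obtained by iteratively short-cutting cycles), and this reduction only removes vertices, so the resulting simple $p$-to-$i$ path still avoids $j$. This contradicts $j \in \mathbf{C}_i$, completing the argument; the symmetric case yields $k \succ j$ instead.

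I do not anticipate any genuinely hard obstacle here, since the statement is essentially a standard fact that cut vertices separating a fixed pair of vertices in a connected graph are linearly ordered along every path connecting them. The only subtlety to handle carefully is the walk-to-simple-path reduction, to ensure that the contradicting path truly avoids $j$; this is straightforward but worth stating explicitly so that the argument remains valid even if $Q$ and $R$ share vertices other than $k$. The transitivity and asymmetry of the leading relation, already noted in the text preceding the proposition, guarantee that the dichotomy $j \succ k$ or $k \succ j$ is exclusive.
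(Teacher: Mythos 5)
Your proof is correct and uses essentially the same path-concatenation argument as the paper: assume one leader does not lead the other, produce a $p$-to-$i$ walk that bypasses a supposed cut vertex, and contradict the definition of leading. Your version is slightly more streamlined in that it fixes a single simple path through both $j$ and $k$ and uses their order along it (together with an explicit walk-to-simple-path reduction), whereas the paper reaches the same contradiction via the intermediate claim that every $k$-to-$i$ path must pass $j$; the substance is the same.
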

\begin{proof}
    Because both $j$ and $k$ are $i$'s leaders, every path from the principal $p$ to $i$ passes both $j$ and $k$. According to the definition of leading, if $j$ does not lead $k$, there is at least one path $\rho_{pk}$ from $p$ to $k$ that doesn't pass $j$.
    The premise that $j$ leads $i$ requires that $j$ must be on every path from $p$ to $i$. Then every path from $k$ to $i$ must pass $j$. Otherwise, there will exist one path $\rho_{ki}$ not passing $j$, making a path $\rho_{pk} + \rho_{ki}$ from $p$ to $i$ that doesn't pass $j$. This contradicts the premise that $j$ is on every path from $p$ to $i$.
    %Therefore, when $j$ leads $i$ but does not lead $k$, every path from $k$ to $i$ must pass $j$.

    {\begin{figure}[htb]
    \centering
		\begin{tikzpicture}[thick,global scale=0.8]
		    \node (1) at (-1,0.0)[circle,draw,minimum size=6mm]{};
    		\node (2) at (1,1)[circle,draw,minimum size=6mm]{}
    		edge[-, dashed] (1);
    		\node (3) at (1.0,-1)[circle,draw,minimum size=6mm]{}
    		edge[-, dashed, red] (1)
    		edge[-, dashed] (2);
    		\node (4) at (3,0)[circle,draw,minimum size=6mm]{}
    		edge[-, dashed, red] (2)
    		edge[-, dashed] (3);
    		
    		\node (11) at (-1,0.0){$p$};
    		\node (12) at (1,1){$k$};
    		\node (13) at (1,-1){$j$};
    		\node (14) at (3,0){$i$};
    		
    		\node (21) at (0,-0.8)[text=red]{\cancel{$\rho_{pj}$}};
    		\node (22) at (2,0.8)[text=red]{\cancel{$\rho_{ki}$}};
    		\node (23) at (0,0.8){$\rho_{pk}$};
    		\node (24) at (0.7,0){$\rho_{kj}$};
    		\node (25) at (2,-0.8){$\rho_{ji}$};
		\end{tikzpicture}
        \caption{A part of an invitation graph, where every dashed edge represents the paths between two nodes and excluding the other two nodes in this graph. If $j,k \in \mathbf{C}_i$, and $j$ does not lead $k$, then neither path $\rho_{pj}$ that does not pass $k$, nor path $\rho_{ki}$ that does not pass $j$, exists in the invitation graph.}
        \label{app_fig:trasitivity in domination relation}
    \end{figure}
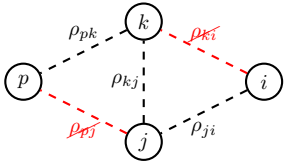 }

    Now we know that, if $j$ does not lead $k$, every path from $k$ to $i$ must pass $j$. This also means every path $\rho_{ji}$ from $j$ to $i$ doesn't pass $k$.
    Suppose there is a path $\rho_{pj}$ from $p$ to $j$ that doesn't pass $k$. Then there must be a path $\rho_{pj} + \rho_{ji}$ from $p$ to $i$ that doesn't pass $k$, which contradicts the premise that $k$ leads $i$. Hence, if $j$ does not lead $k$, every path from $p$ to $j$ must pass $k$, which means $k$ leads $j$.
\end{proof}

From Proposition \ref{app_critical list}, we know that for any agent $i$ with $\mathbf{C}_i \neq \emptyset$, there's a leader sequence  ($c_i^1$, $c_i^2$, $\dotsc$, $c_i^{|\mathbf{C}_i|}$) between $p$ and $i$, where every one in this sequence leads all her latter ones, which can be notated as:
$c_i^j \in \mathbf{C}_{c_i^{k}}$ for any $1 \leq j < k \leq |\mathbf{C}_i|$.
Also, we know that $\forall 1 < j \leq |\mathbf{C}_i|$, ($c_i^1$, $\dotsc$, $c_i^{j-1}$) is the leader sequence of $j$.

\section{CIM Equilibrium Analysis}

\subsection{Unconditional Contributors} \label{app_unconditional contributor}

As is discussed in the proof of Proposition \ref{theorem:CIM contributing strategy}, there are two cases for an arbitrary agent $i$ in equilibrium: (1) $\pi_i(l)\le0$; (2) $\pi_i(l)>0$. In the first case, $i$ has an equilibrium threshold $r_i \in [l,u)$, where $\pi_i(q_i=r_i) = 0$, and $i$ contributes only when $q_i>r_i$. In the second condition, we artificially set $r_i=l$, which does not change the analysis, because
under this condition, $i$ will always contribute, and
\begin{itemize}
    \item from $i$'s perspective, the probability that she beats any other $j$ is $P(q_j < q_i \lor q_j < r_j) = F_{q_j}(\max \{q_i, r_j\})$, which is not affected by the value of $r_i$;
    \item from any other $j$'s perspective, the probability that $j$ beats $i$ is $P(q_i < q_j) = F_{q_i}(q_j)$. Since $q_j \geq l$, this value exactly equals $F_{q_i}(\max \{q_j, r_i\})$.
\end{itemize}

Therefore, setting $r_i=l$ exerts the same influence on every agent, and thus does not affect the equilibrium of the game.
It is worth noting that, for such agent $i$ whose $\pi_i(l)>0$, under the setting that $r_i=l$, $\pi_i(r_i)>0$, which is different from the other kind of agent $j$, for whom $\pi_{j}(r_j) = 0$.
Given the above facts, we conclude that, for an arbitrary agent $i$, \begin{itemize}
    \item if $r_i>l$, then $\pi_i(r_i) = 0$;
    \item if $r_i=l$, then $\pi_i(r_i) \ge 0$.
\end{itemize}
$i$ might be an unconditional contributor for whom $\pi_i(r_i) > 0$ only if $r_i=l$. We have one thing for sure that, such an $i$ has the lowest threshold among all agents in equilibrium. Now we illustrate a lemma to see some properties about the lowest threshold.

\begin{lemma}\label{app_lm:minimum threshold}
If there is a unique agent $i$ who has the lowest threshold among all agents, then this lowest threshold $r_{min}=r_i=l$.
If there are multiple agents $i, \cdots, j$ who have the lowest thresholds among all agents, then the lowest threshold $r_{min} = r_i = \cdots = r_j > l$.
\end{lemma}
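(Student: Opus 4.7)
The plan is to prove each part by contradiction, using the utility formula in Eq.(\ref{eq:CIM_contribute_utility}), Lemma \ref{app_ps:threshold relation in dominating relation}, and the threshold convention from Theorem \ref{theorem:CIM contributing strategy} and the discussion in Appendix \ref{app_unconditional contributor}. That convention fixes $r_i = l$ precisely when $\pi_i(l) \geq 0$, and sets $r_i > l$ with $\pi_i(r_i) = 0$ otherwise.

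For the first claim, I would suppose $i$ is uniquely the lowest-threshold agent and assume $r_i > l$. Because $r_i$ is strictly the smallest, every $k \in \mathbf{P}_i$ satisfies $r_k > r_i > l$, so $\max\{r_k, l\} = r_k = \max\{r_k, r_i\}$. Substituting into Eq.(\ref{eq:CIM_contribute_utility}) gives
\[
\pi_i(l) \;=\; \mathcal{M} \prod_{k \in \mathbf{P}_i} F(r_k) - c \;=\; \pi_i(r_i).
\]
Since $r_i > l$, $i$ is not an unconditional contributor and $\pi_i(r_i) = 0$, so $\pi_i(l) = 0$. By the convention above, $\pi_i(l) \geq 0$ forces $r_i = l$, contradicting $r_i > l$. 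Hence the unique lowest threshold must equal $l$.

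For the second claim, suppose agents $i$ and $j$ share the lowest threshold, and for contradiction set $r_i = r_j = l$. By Lemma \ref{app_ps:threshold relation in dominating relation}, a leading relation would force strict inequality of thresholds, so neither $i$ nor $j$ leads the other; consequently $j \in \mathbf{P}_i$. The factor for $k = j$ in $\pi_i(l)$ is $F(\max\{r_j, l\}) = F(l)$. Because $q$ is continuously distributed on $[l, u]$, $F(l) = 0$, making the product in Eq.(\ref{eq:CIM_contribute_utility}) vanish and giving $\pi_i(l) = -c < 0$. But $r_i = l$ requires $\pi_i(l) \geq 0$ by the convention, which is the desired contradiction.

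The main subtlety will be in Part 1: the step $\pi_i(l) = 0 \Rightarrow r_i = l$ is not inequality manipulation but an appeal to the threshold-selection convention laid out in the paper, needed to break the tie on the interval where $\pi_i$ is identically zero. Once that convention is accepted, both contradictions follow from elementary substitution into Eq.(\ref{eq:CIM_contribute_utility}), combined in Part 2 with the continuity of $F$ at the lower endpoint $l$.
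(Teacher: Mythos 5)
Your proof is correct and follows essentially the same route as the paper's: both parts rest on the observation that when every competitor of $i$ has a strictly higher threshold, $\pi_i$ is constant on $[l,r_i]$, and on the vanishing of $F$ at $l$ once another minimal-threshold agent sits in $\mathbf{P}_i$. The only divergence is cosmetic — where you close Part 1 by invoking the tie-breaking convention at $\pi_i(l)=0$, the paper asserts $\pi_i(q_i=r_{se})>0$ for the second-lowest threshold $r_{se}$ and deduces $\pi_i(l)=\pi_i(r_{se})>0$, so $i$ is an unconditional contributor; the degenerate-indifference issue you explicitly flag is present in the paper's version as well and is handled there no more rigorously.
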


\begin{proof}
    When there is a unique agent $i$ with $r_{min}$ in equilibrium, denote the second lowest threshold by $r_{se}$. We have that $\forall j \ne i, r_j \ge r_{se} > r_{min}$. On the one hand, when $i$ contributes with $q_i = r_{se} > r_{min}$, her expected utility should be positive, that is
    {\small \begin{equation}
        \pi_i(r_{se}) = \mathcal{M} \cdot \prod_{j \in \mathbf{P}_i} F_{q_j}(\max\{r_j, r_{se}\}) - c > 0. \notag
    \end{equation}}On the other hand, when $i$ contributes with $q_i = l \leq r_{min} < r_{se} \leq r_{j \neq i}$, the expected utility is {\small \begin{align}
        \pi_i(l) & = \mathcal{M} \cdot \prod_{j \in \mathbf{P}_i} F_{q_j}(\max\{r_j, l\}) - c \notag \\
        & = \mathcal{M} \cdot \prod_{j \in \mathbf{P}_i} F_{q_j}(r_j) - c \notag \\
        & = \mathcal{M} \cdot \prod_{j \in \mathbf{P}_i} F_{q_j}(\max\{r_j, r_{se}\}) - c \notag \\
        & = \pi_i(r_{se})>0. \notag
    \end{align}}This indicates that $i$ is an unconditional contributor who will contribute with any ability and for whom $r_i=l$. Every other agent is not an unconditional contributor since they have thresholds higher than $l$.

    When there are multiple agents $i, \cdots, j$ with $r_{min}$ in equilibrium, $r_i = r_j = r_{min}$. Now we consider the leading relation between $i$ and $j$. We know $\mathbf{U} = \{i\} + \mathbf{D}_i + \mathbf{P}_i$. Thus $j \in \mathbf{D_i}$ or $j \in \mathbf{P}_i$:\begin{itemize}
        \item if $j \in \mathbf{D}_i$, which means, $i$ leads $j$, then $i \in \mathbf{P}_j$;
        \item if $j \notin \mathbf{D}_i$, then $j \in \mathbf{P}_i$.
    \end{itemize}

    We summarize the above as $i \in \mathbf{P}_j$ or $j \in \mathbf{P}_i$. Let's suppose $j \in \mathbf{P}_i$.
    The expected utility for $i$ to contribute with $q_i =r_i=r_{min}$ is
    {\small \begin{align}
        \pi_i(r_{i}) = & \mathcal{M} \cdot F_{q_j}(\max \{r_{j}, r_i\}) \cdot \prod_{k \in \mathbf{P}_i, k \ne j}F_{q_k}(\max \{r_k, r_i\})  - c \notag \\
        = & \mathcal{M} \cdot F_{q_j}(r_{min}) \cdot \prod_{k \in \mathbf{P}_i, k\ne j}F_{q_k}(r_k)  - c. 
    \end{align}}If $r_{min} = l$, $F_{q_j}(r_{min}) = 0$, and $\pi_i(r_i) = \mathcal{M} \cdot 0 - c < 0$, which contradicts the premise that $r_i=r_{min}$ is the threshold of $i$. Therefore, $r_{min} > l$ when there are multiple agents with $r_{min}$. And there is no unconditional contributor.
    % Under this condition, for all agents $r \ge r_{min} > l$, thus $\pi(r)>0$ for all agents.
\end{proof}

Lemma \ref{app_lm:minimum threshold} gives a clear illustration about the unconditional contributors: (1) there is at most one unconditional contributor in each contest; (2) only for this unconditional contributor (denote by $i$), $r_i=l$ and $\pi_i(r_i)>0$; (3) for every other agent $j$, $r_j>l$ and $\pi_j(r_j)=0$;
(4) the unconditional contributor does not necessarily exist in every contest. Within one contest under CIM, for an arbitrary agent $i$, \begin{itemize}
    \item $r_i>l$ if and only if $\pi_i(r_i)=0$;
    \item $r_i=l$ if and only if $\pi_i(r_i)>0$, and such an $i$ is the unique unconditional contributor in this contest.
\end{itemize}

Note that, for the unconditional contributor $i$, all other agents have higher thresholds than her, which is exactly the condition that $|\mathbf{Q}_i|=0$ in Theorem \ref{theorem:threshold for non-leaf agent}.

\subsection{Proof of Proposition \ref{app_lm:leaf type threshold}}\label{appendix: leaf_node_threshould_proof}
{\textbf{Proposition \ref{app_lm:leaf type threshold}.} \textit{In the equilibrium under CIM, every agent $z$ who is a leaf node in the ranking tree of the invitation graph has an identical threshold $r_{z} = F^{-1}\left(\sqrt[|\mathbf{U}|-1]{\frac{c}{\mathcal{M}}}\right)$, which is the highest among all agents' thresholds.}}
\begin{proof}
    % The agents who are leaf nodes in the ranking tree lead no one. Call them by leaf agent for short.
    Assume all leaf agents except $i$ use the threshold $r_{z}$ in equilibrium. If $i$ is also a leaf agent, when $i$ contributes with $q_i$, the expected utility is
    {\small \begin{align}
        \pi_{i}(q_i) = \mathcal{M} \cdot \prod_{\begin{subarray}{l}
        z \in \mathbf{Z}, z \neq i
        \end{subarray}} & F_{q_z}\left(\max\{r_{z}, q_i\}\right) \notag 
        \cdot  \prod_{k \notin \mathbf{Z}} F_{q_k}(\max\{r_k, q_i\}) - c. \notag
    \end{align}}Here $\mathbf{Z}$ is the set of all leaf agents. For any agent $k \notin \mathbf{Z}$, $|\mathbf{D}_k| \geq 1$ and $|\mathbf{P}_k| \le |\mathbf{U}|-2$. Thus according to Lemma \ref{app_lm:threshold upper bound} and that $F(x)$ is monotonically increasing, we know
    {\small \begin{align}
        r_k \leq F^{-1}\left( \sqrt[\leftroot{0}\uproot{8}|\mathbf{P}_k|]{\frac{c}{\mathcal{M}}}\right)
        < F^{-1}\left( \sqrt[\leftroot{0}\uproot{8}|\mathbf{U}|-1]{\frac{c}{\mathcal{M}}}\right) = r_{z}. \notag
    \end{align}}If $q_i = r_{z} - \delta$ where $\delta \rightarrow 0^+$, then $i$'s expected utility for contributing is
    {\small \begin{align}
        \pi_i(q_i) &= \mathcal{M} \cdot  \prod_{\begin{subarray}{l}
        z \in \mathbf{Z}, z \neq i
        \end{subarray}} F_{q_z}(r_{z})  \cdot \prod_{k \notin \mathbf{Z}}F_{q_k}(q_i) - c \notag \\
        &< \mathcal{M} \cdot  \prod_{\begin{subarray}{l}
        z \in \mathbf{Z}, z \neq i
        \end{subarray}} F_{q_z}(r_{z}) \cdot \prod_{k \notin \mathbf{Z}}F_{q_k}(r_{z})  - c \notag \\
        &= \mathcal{M} \cdot F(r_{z})^{|\mathbf{U}|-1} - c = 0.\notag
    \end{align}}This means $r_i > r_{z} - \delta$ where $\delta \rightarrow 0^+$. On the other hand, according to Lemma \ref{app_lm:threshold upper bound}, we have $r_i \leq r_{z}$. By sandwich rule, $r_i = r_{z}$.
\end{proof}

\section{Equilibrium Computation}
\subsection{Proof of Lemma \ref{app_lm:non dominating threshold relation}}\label{appendix:plomless} 
\textbf{Lemma \ref{app_lm:non dominating threshold relation}.} \textit{For any two agents $i$ and $j$ who don't have leading relation (i.e. $i \nsucc j  \land j \nsucc i$), if $p_i^{lom} < p_j^{lom}$, then $r_i < r_j$.}
\begin{proof}
    Because $i$ and $j$ don't have leading relation, we have $i \notin \mathbf{D}_j, j \notin \mathbf{D}_i$, and $\mathbf{D}_i \cap \mathbf{D}_j = \emptyset$. Denote the set of common competitors of $i$ and $j$ by $\mathbf{P}_{ij} = \mathbf{P}_i \cap \mathbf{P}_j$. Moreover, we know that $\mathbf{U} = \mathbf{D}_i + \mathbf{P}_i + \{i\} = \mathbf{D}_j + \mathbf{P}_j + \{j\}$, the relationships within these sets are shown in Fig. \ref{app_fig:set relations}. We have that $\mathbf{P}_{ij} = \mathbf{P}_{i} - \mathbf{D}_j - \{j\} = \mathbf{P}_{j} - \mathbf{D}_i - \{i\}$.
    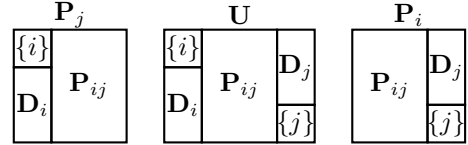
\begin{figure}[htb]
        \centering
		\begin{tikzpicture}[thick]
	        \draw (0,0) rectangle (0.5,1);
	        \draw (0,1) rectangle (0.5,1.5);
	        \draw (0.5,0) rectangle (1.5,1.5);
	        \draw (1.5,0) rectangle (2,0.5);
	        \draw (1.5,0.5) rectangle (2,1.5);
	        \node (1) at (0.25,0.5) {$\mathbf{D}_i$};
	        \node (2) at (0.25,1.25) {$\{i\}$};
	        \node (3) at (1.0,0.75) {$\mathbf{P}_{ij}$};
	        \node (4) at (1.75,1.0) {$\mathbf{D}_j$};
	        \node (5) at (1.75,0.25) {$\{j\}$};
	        \node (6) at (1.0,1.7) {$\mathbf{U}$};
	
	        \draw (-1.5,0) rectangle (-0.5,1.5);
	        \draw (-2,0) rectangle (-1.5,1);
	        \draw (-2,1) rectangle (-1.5,1.5);
	        \node (1) at (-1.75,0.5) {$\mathbf{D}_i$};
	        \node (2) at (-1.75,1.25) {$\{i\}$};
	        \node (3) at (-1,0.75) {$\mathbf{P}_{ij}$};
	        \node (6) at (-1.25,1.7) {$\mathbf{P}_j$};
	
	        \draw (2.5,0) rectangle (3.5,1.5);
	        \draw (3.5,0) rectangle (4,0.5);
	        \draw (3.5,0.5) rectangle (4,1.5);
	        \node (1) at (3.75,1.0) {$\mathbf{D}_j$};
	        \node (2) at (3.75,0.25) {$\{j\}$};
	        \node (3) at (3,0.75) {$\mathbf{P}_{ij}$};
	        \node (6) at (3.25,1.7) {$\mathbf{P}_i$};
		\end{tikzpicture}
        \caption{Illustration of $i$ and $j$ who don't have leading relation.}
        \label{app_fig:set relations}
    \end{figure}

    Assume that $r_i \geq r_j$ in equilibrium, according to Lemma \ref{app_ps:threshold relation in dominating relation}, we have $\forall g \in \mathbf{D}_i,\ r_g > r_i \geq r_j$.
    According to Lemma \ref{app_lm:minimum threshold} in Appendix \ref{app_unconditional contributor}, when $r_i \geq r_j$, only $j$ could be the unconditional contributor. Therefore, the expected utility of $i$ when contributing with $q_i = r_i$ is
    {\small \begin{multline} \label{app_eq:unable to name 3}
      \pi_i(r_i) = \mathcal{M} \cdot \prod_{h \in \mathbf{D}_j} F_{q_h}(\max\{r_h, r_i\}) \\
        \cdot \prod_{k \in \mathbf{P}_{ij}} F_{q_k}(\max\{r_k, r_i\}) \cdot
        F_{q_j}(\max\{r_j, r_i\}) - c,
    \end{multline}} \\ 
    %  {\small \begin{multline} \label{app_eq:unable to name 3}
    %   \pi_i(r_i) = \mathcal{M} \cdot \mathcal F({h,{\mathbf{D}_j}};R_h^i) \\
    %     \cdot \prod_{k \in \mathbf{P}_{ij}} F_{q_k}(\max\{r_k, r_i\}) \cdot
    %     F_{q_j}(\max\{r_j, r_i\}) - c,
    % \end{multline}} \\ 
    % {\small \begin{multline} \label{app_eq:unable to name 3}
    %   \pi_i(r_i) = \mathcal{M} \cdot \prod_{h \in \mathbf{D}_j} F_{q_h}(\max\{r_h, r_i\}) \\
    %     \cdot \prod_{k \in \mathbf{P}_{ij}} F_{q_k}(\max\{r_k, r_i\}) \cdot
    %     F_{q_j}(\max\{r_j, r_i\}) - c,
    % \end{multline}} \\
    which equals $0$. On the other hand, the expected utility of $j$ when contributing with $q_j = r_j$ is
    {\small \begin{multline} \label{app_eq:U_j^P(r_j)}
        \pi_j(r_j) = \mathcal{M} \cdot  \prod_{g \in \mathbf{D}_i} F_{q_g}(\max\{r_g, r_j\}) \\
        \cdot  \prod_{k \in \mathbf{P}_{ij}} F_{q_k}(\max\{r_k, r_j\}) \cdot
        F_{q_i}(\max\{r_i, r_j\}) - c,
    \end{multline}}which is non-negative. Since $r_g > r_i \geq r_j$, $\forall g \in \mathbf{D}_i, F_{q_g}(\max\{r_g, r_j\}) = F_{q_g}(r_g)$, and under the i.i.d. assumption, $F_{q_i}(\max\{r_i, r_j\}) = F_{q_j}(\max\{r_i, r_j\}) = F(r_i)$. Combining Eqs.\eqref{app_eq:unable to name 3} and \eqref{app_eq:U_j^P(r_j)}, we have
    {\small \begin{multline} \label{app_eq:some inequility}
        \prod_{h \in \mathbf{D}_j} F_{q_h}(\max\{r_h, r_i\}) \cdot \prod_{k \in \mathbf{P}_{ij}} F_{q_k}(\max\{r_k, r_i\})\\
        \leq \prod_{g \in \mathbf{D}_i} F_{q_g}(r_g) \cdot \prod_{k \in \mathbf{P}_{ij}} F_{q_k}(\max\{r_k, r_j\}).
    \end{multline}}\\
    Since $r_i \geq r_j$, it is easy to know { $F_{q_k}(\max\{r_k, r_i\}) \geq F_{q_k}(\max\{r_k, r_j\}),$} thus {\small $$\prod_{k \in \mathbf{P}_{ij}} F_{q_k}(\max\{r_k, r_i\}) \geq \prod_{k \in \mathbf{P}_{ij}} F_{q_k}(\max\{r_k, r_j\}).$$}\\
    Combining it with Eq.\eqref{app_eq:some inequility}, we further have {\small $$\prod_{h \in \mathbf{D}_j} F_{q_h}(\max\{r_h, r_i\}) \leq \prod_{g \in \mathbf{D}_i} F_{q_g}(r_g).$$}\\
    Together with a simply known inequality that {$\prod_{h \in \mathbf{D}_j} F_{q_h}(r_h) \leq \prod_{h \in \mathbf{D}_j} F_{q_h}(\max\{r_h, r_i\}),$} we have
    {\small \begin{equation} \label{app_eq:monotonicity step}
         \prod_{h \in \mathbf{D}_j} F_{q_h}(r_h) \leq \prod_{g \in \mathbf{D}_i} F_{q_g}(r_g).
    \end{equation}}
    
    Note that, Eq.\eqref{app_eq:monotonicity step} is the conclusion under the assumption that $r_i \geq r_j$ for $i$ and $j$ who don't have leading relation, which means
    {\small $
        r_i \geq r_j \Rightarrow \prod_{g \in \mathbf{D}_i} F_{q_g}(r_g) \geq \prod_{h \in \mathbf{D}_j} F_{q_h}(r_h), \notag
    $}
    then by contrapositive we have
    {$
        \prod_{g \in \mathbf{D}_i} F_{q_g}(r_g) < \prod_{h \in \mathbf{D}_j} F_{q_h}(r_h) \Rightarrow r_i < r_j \notag
    $}, which according to the definition of $p_i^{lom}$ is equivalent to
        $p_i^{lom} < p_j^{lom} \Rightarrow r_i < r_j.$
\end{proof}

\subsection{Proof of Lemma \ref{app_lm:identical threshold}}\label{appendix:plomeq}
 
\textbf{Lemma \ref{app_lm:identical threshold}.}\textit{    If $
         p_i^{lom} = p_j^{lom}
    $, then $r_i = r_j$.}
 
\begin{proof}
    According to Lemma \ref{app_ps:threshold relation in dominating relation}, we know $\forall g \in \mathbf{D}_i, r_g > r_i$ and $\forall h \in \mathbf{D}_j, r_h > r_j$. If $i$ leads $j$, then $(\{j\} + \mathbf{D}_j) \subseteq \mathbf{D}_i$. This relation leads to
    {\small \begin{equation}
        \prod_{g \in \mathbf{D}_i}F_{q_g}(r_g) \leq \prod_{h \in \mathbf{D}_j}F_{q_h}(r_h) \cdot F_{q_j}(r_j) < \prod_{h \in \mathbf{D}_j}F_{q_h}(r_h).
    \end{equation}} \\
    By contrapositive, if $\prod_{g \in \mathbf{D}_i} F_{q_g}(r_g)= \prod_{h \in \mathbf{D}_j} F_{q_h}(r_h)$, there is no leading relation between $i$ and $j$.
    Recall that by definition, $p_i^{lom} = \prod_{g \in \mathbf{D}_i}F_{q_g}(r_g)$.
    This means, given $p_i^{lom} = p_j^{lom}$, $i$ and $j$ don't have leading relation.
    Then $\mathbf{P}_j = \mathbf{D}_i + \mathbf{P}_{ij} + \{i\}$ and $\mathbf{P}_i = \mathbf{D}_j + \mathbf{P}_{ij} + \{j\}$ as Fig. \ref{app_fig:set relations} shows.
    In equilibrium, the expected utility for $j$ to contribute with $q_j = r_i$ is
    {\small \begin{multline}
        \pi_j(r_i) = \mathcal{M} \cdot \prod_{g \in \mathbf{D}_i} F_{q_g}(\max\{r_g, r_i\}) \\
        \cdot \prod_{k \in \mathbf{P}_{ij}} F_{q_k}(\max\{r_k, r_i\}) \cdot F_{q_i}(\max\{r_i, r_i\})  - c. \notag
    \end{multline}}Since $r_g > r_i, \forall g \in \mathbf{D}_i$, $F_{q_g}(\max\{r_g, r_i\}) = F_{q_g}(r_g)$, and the right side of the former equality can be simplified as
    {\small \begin{multline}
        \mathcal{M} \cdot \prod_{g \in \mathbf{D}_i} F_{q_g}(r_g)
        \cdot \prod_{k \in \mathbf{P}_{ij}} F_{q_k}(\max\{r_k, r_i\}) \cdot F_{q_i}(r_i)  - c.\notag
    \end{multline}}\\
    As $p_i^{lom} = p_j^{lom}$, which is $\prod_{g \in \mathbf{D}_i} F_{q_g}(r_g) = \prod_{h \in \mathbf{D}_j} F_{q_h}(r_h)$, we further get
    {\small \begin{multline} \label{app_eq:unable to name 4}
        \pi_j(r_i) = \mathcal{M} \cdot \prod_{h \in \mathbf{D}_j} F_{q_h}(r_h) \cdot \prod_{k \in \mathbf{P}_{ij}} F_{q_k}(\max\{r_k, r_i\}) \cdot F_{q_i}(r_i) - c. 
    \end{multline}}
    
    % {\begin{align} \label{app_eq:unable to name 4}
    %     \pi_j(q_j=r_i) = & \mathcal{M} \cdot \prod_{g \in \mathbf{D}_i} F_{q_g}(\max\{r_g, r_i\}) \cdot \prod_{k \in \mathbf{P}_{ij}} F_{q_k}(\max\{r_k, r_i\}) \cdot F_{q_i}(\max\{r_i, r_i\})  - c \notag \\
    %     = & \mathcal{M} \cdot \prod_{g \in \mathbf{D}_i} F_{q_g}(r_g) \cdot \prod_{k \in \mathbf{P}_{ij}} F_{q_k}(\max\{r_k, r_i\}) \cdot F_{q_i}(r_i)  - c \notag \\
    %     = & \mathcal{M} \cdot \prod_{h \in \mathbf{D}_j} F_{q_h}(r_h) \cdot \prod_{k \in \mathbf{P}_{ij}} F_{q_k}(\max\{r_k, r_i\}) \cdot F_{q_i}(r_i) - c.
    % \end{align}} \\
    In equilibrium, $i$'s expected utility of contributing with $q_i = r_i$ must be non-negative, that is
    {\small \begin{multline}
         \pi_i(r_i) =  \mathcal{M} \cdot \prod_{h \in \mathbf{D}_j} F_{q_h}(\max\{r_h, r_i\}) \\
         \cdot \prod_{k \in \mathbf{P}_{ij}} F_{q_k}(\max\{r_k, r_i\}) \cdot
          F_{q_j}(\max\{r_j, r_i\}) - c \geq 0.\notag
    \end{multline}} \\
    Since $F_{q_h}(r_h) \leq F_{q_h}(\max\{r_h, r_i\})$, it is easy to see $\prod_{h \in \mathbf{D}_j}F_{q_h}(r_h) \leq \prod_{h \in \mathbf{D}_j}F_{q_h}(\max\{r_h,r_i\}).$ Moreover, $F_{q_i}(r_i) \leq F_{q_j}(\max\{r_j, r_i\})$ under the i.i.d. assumption. Therefore, the above equation is reformulated into
    {\small \begin{align} \label{app_eq:unable to name 5}
         \pi_i(r_i) \geq\mathcal{M} \cdot \prod_{h \in \mathbf{D}_j} & F_{q_h}(r_h) 
         \cdot   \prod_{k \in \mathbf{P}_{ij}} F_{q_k}(\max\{r_k, r_i\}) \cdot F_{q_i}(r_i) - c.\notag
    \end{align}} \\
    The right-hand side is identical to that in Eq.\eqref{app_eq:unable to name 4}. Thus,
    {\begin{equation} \label{app_eq:unable to name 6}
        \pi_i(q_i=r_i) \geq \pi_j(q_j=r_i).
    \end{equation}}

    According to Lemma \ref{app_lm:minimum threshold} in the Appendix \ref{app_unconditional contributor}, if $i$ is an unconditional contributor, she is the unique agent who has the lowest threshold $l$, and $\pi_i(q_i=r_i)>0$. Under this condition, $r_j > r_i = l$, which leads to $\pi_j(q_j=r_i) < 0 < \pi_i(q_i=r_i)$. This satisfies the relation in Eq.\eqref{app_eq:unable to name 6}.
    If $i$ is not an unconditional contributor, then $\pi_j(q_j=r_i) \leq \pi_i(q_i=r_i) = 0$, so $r_j \geq r_i$. In conclusion, when $p_i^{lom} = p_j^{lom}$,  $r_j \geq r_i$.

    With the same process on reversed $i$ and $j$, we can derive that when $p_i^{lom} = p_j^{lom}$, there is that $r_i \geq r_j$. Therefore, there must be that $r_i = r_j$ if $p_i^{lom} = p_j^{lom}$.
\end{proof}

\subsection{Complexity of of Algorithm \ref{app_alg:solving CIM}}\label{Complexity analysis}
This algorithm has four components. 
\changreflcolor{black}
\begin{enumerate}[leftmargin=*]
    \item 	In the initialization,  the major computation is searching the cut vertices and constructing the leading tree. Searching the cut vertices can be done using the classic Tarjan's algorithm, whose complexity is linear in $n$. Computing the leaf nodes’ threshold is $O(n^2)$.
    \item 	The second component is for computing the probability $p_i^{lom}$ for each available $i$. For each $i$, computing $p_i^{lom}$ needs to traverse all agents in set ${\mathbf D}_i$. There are at most n agents in either $\mathbf{U}$ or ${\mathbf D}_i$. The complexity is $O(n^2)$. \label{alg_complexity_2}
    \item   	The third component is a sorting with   complexity  $O(n^2)$.
	\item   The fourth component compute $r_i$  for all nodes in temp set $\mathbf{T}$, which has $n$ agents at most. The computation of $r_i$  for each $i$ is linear in $n$. The complexity is $O(n^2)$. \label{alg_complexity_4}
\end{enumerate}

Components \ref{alg_complexity_2}) - \ref{alg_complexity_4}) repeat in the \textbf{while} loop until all $n$ nodes are handled. Therefore, the overall complexity of this algorithm is $O(n^3)$. 
\changreflcolor{blue}

\newpage

\section{Supplements for Experiments} \label{app_experiment}

\subsection{Reducing Computation Complexity} \label{app_verifying CIM invitation}

{
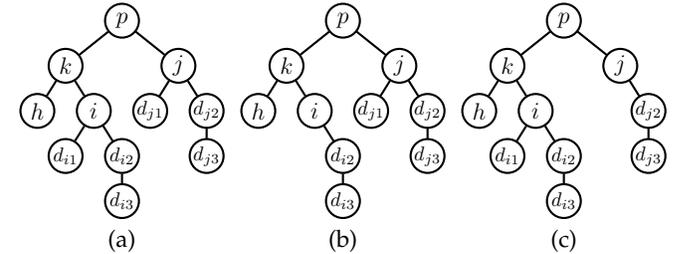
\begin{figure}[h]
    \centering
    \begin{subfigure}
        \centering
        \begin{tikzpicture}[thick, global scale=0.75]
            \node (p) at (0,0)[circle,draw,minimum size=6mm]{};
		    \node (1) at (-1,-0.8)[circle,draw,minimum size=6mm]{}
		    edge[-] (p);
    		\node (2) at (-1.5,-1.6)[circle,draw,minimum size=6mm]{}
    		edge[-] (1);
    		\node (i) at (-0.5,-1.6)[circle,draw,minimum size=6mm]{}
    		edge[-] (1);
    		\node (4) at (-0.0,-2.4)[circle,draw,minimum size=6mm]{}
    		edge[-] (i);
    		\node (5) at (-0.0,-3.2)[circle,draw,minimum size=6mm]{}
    		edge[-] (4);
    		\node (j) at (1.0,-0.8)[circle,draw,minimum size=6mm]{}
    		edge[-] (p);
    		\node (7) at (0.5,-1.6)[circle,draw,minimum size=6mm]{}
    		edge[-] (j);
    		\node (8) at (1.5,-1.6)[circle,draw,minimum size=6mm]{}
    		edge[-] (j);
    		\node (9) at (1.5,-2.4)[circle,draw,minimum size=6mm]{}
    		edge[-] (8);
    		\node (x) at (-1.0,-2.4)[circle,draw,minimum size=6mm]{}
    		edge[-] (i);
    		\node (10) at (0.0,0.0){\large $p$};
    		\node (11) at (-0.5,-1.6){\large $i$};
    		\node (12) at (1.0,-0.8){\large $j$};
    		\node (13) at (-1.0,-2.4){$d_{i1}$};
    		\node (14) at (0,-2.4){$d_{i2}$};
    		\node (15) at (0,-3.2){$d_{i3}$};
    		\node (16) at (0.5,-1.6){$d_{j1}$};
    		\node (17) at (1.5,-1.6){$d_{j2}$};
    		\node (18) at (1.5,-2.4){$d_{j3}$};
    		\node (k) at (-1, -0.8){\large $k$};
    		\node (h) at (-1.5, -1.6){\large $h$};
    		\node (20) at (0, -3.9){\large (a)};
		\end{tikzpicture}
    \end{subfigure}
    % \quad
    \begin{subfigure}
        \centering
        \begin{tikzpicture}[thick, global scale=0.75]
            \node (p) at (0,0)[circle,draw,minimum size=6mm]{};
		    \node (1) at (-1,-0.8)[circle,draw,minimum size=6mm]{}
		    edge[-] (p);
    		\node (2) at (-1.5,-1.6)[circle,draw,minimum size=6mm]{}
    		edge[-] (1);
    		\node (i) at (-0.5,-1.6)[circle,draw,minimum size=6mm]{}
    		edge[-] (1);
    		\node (4) at (-0.0,-2.4)[circle,draw,minimum size=6mm]{}
    		edge[-] (i);
    		\node (5) at (-0.0,-3.2)[circle,draw,minimum size=6mm]{}
    		edge[-] (4);
    		\node (j) at (1.0,-0.8)[circle,draw,minimum size=6mm]{}
    		edge[-] (p);
    		\node (7) at (0.5,-1.6)[circle,draw,minimum size=6mm]{}
    		edge[-] (j);
    		\node (8) at (1.5,-1.6)[circle,draw,minimum size=6mm]{}
    		edge[-] (j);
    		\node (9) at (1.5,-2.4)[circle,draw,minimum size=6mm]{}
    		edge[-] (8);
    		\node (10) at (0.0,0.0){\large $p$};
    		\node (11) at (-0.5,-1.6){\large $i$};
    		\node (12) at (1.0,-0.8){\large $j$};
    		\node (14) at (0,-2.4){$d_{i2}$};
    		\node (15) at (0,-3.2){$d_{i3}$};
    		\node (16) at (0.5,-1.6){$d_{j1}$};
    		\node (17) at (1.5,-1.6){$d_{j2}$};
    		\node (18) at (1.5,-2.4){$d_{j3}$};
			\node (k) at (-1, -0.8){\large $k$};
    		\node (h) at (-1.5, -1.6){\large $h$};
    		\node (20) at (0, -3.9){\large (b)};
		\end{tikzpicture}
    \end{subfigure}
    % \quad
    \begin{subfigure}
        \centering
        \begin{tikzpicture}[thick, global scale=0.75]
            \node (p) at (0,0)[circle,draw,minimum size=6mm]{};
		    \node (1) at (-1,-0.8)[circle,draw,minimum size=6mm]{}
		    edge[-] (p);
    		\node (2) at (-1.5,-1.6)[circle,draw,minimum size=6mm]{}
    		edge[-] (1);
    		\node (i) at (-0.5,-1.6)[circle,draw,minimum size=6mm]{}
    		edge[-] (1);
    		\node (4) at (-0.0,-2.4)[circle,draw,minimum size=6mm]{}
    		edge[-] (i);
    		\node (5) at (-0.0,-3.2)[circle,draw,minimum size=6mm]{}
    		edge[-] (4);
    		\node (j) at (1.0,-0.8)[circle,draw,minimum size=6mm]{}
    		edge[-] (p);
    		\node (8) at (1.5,-1.6)[circle,draw,minimum size=6mm]{}
    		edge[-] (j);
    		\node (9) at (1.5,-2.4)[circle,draw,minimum size=6mm]{}
    		edge[-] (8);
    		\node (x) at (-1.0,-2.4)[circle,draw,minimum size=6mm]{}
    		edge[-] (i);
    		\node (10) at (0.0,0.0){\large $p$};
    		\node (11) at (-0.5,-1.6){\large $i$};
    		\node (12) at (1.0,-0.8){\large $j$};
    		\node (13) at (-1.0,-2.4){$d_{i1}$};
    		\node (14) at (0,-2.4){$d_{i2}$};
    		\node (15) at (0,-3.2){$d_{i3}$};
    		\node (17) at (1.5,-1.6){$d_{j2}$};
    		\node (18) at (1.5,-2.4){$d_{j3}$};
			\node (k) at (-1, -0.8){\large $k$};
    		\node (h) at (-1.5, -1.6){\large $h$};
    		\node (20) at (0, -3.9){\large (c)};
        \end{tikzpicture}
    \end{subfigure}
    \caption{Three similar ranking trees. In (a), $i$ and $j$ are of the same type; in (b), $i$ doesn't invite $d_{i1}$; in (c), $j$ doesn't invite $d_{j1}$.}
    \label{app_fig:similar trees}
\end{figure}}

    Consider three similar ranking trees in Fig. \ref{app_fig:similar trees}. $i$ and $j$ in (a) are of the same type.
    Comparing with the case in (a), $i$ doesn't invite $d_{i1}$ in (b), and $j$ doesn't invite $d_{j1}$ in (c).

    With Lemmas \ref{app_lm:non dominating threshold relation} and \ref{app_lm:identical threshold}, it is easy to know the equilibrium thresholds $r_i = r_j$ in (a), $r_i' > r_j'$ in (b), and $r_i'' < r_j''$ in (c).
    It is worth noting that, if we run Algorithm \ref{app_alg:solving CIM} on both (b) and (c), the programs execute in a symmetric way until $r_i'$ is determined in (b), or $r_j''$ is determined in (c).

    To be specific, the agent numbers in (b) and (c) are equal. At the beginning of both programs, $h'$, $d_{i3}'$, $d_{j1}'$, $d_{j3}'$ in (b) and $h''$, $d_{i1}''$, $d_{i3}''$, $d_{j3}''$ in (c) are all leaf agents with identical threshold.
    Then in the first iteration of \textbf{while} loop of both programs, $d_{i2}'$, $d_{j2}'$ in (b) and $d_{i2}''$, $d_{j2}''$ in (c) are of the same type with identical threshold.
    In the next iteration, $i'$ in (b) and $j''$ in (c) are of the same type with identical threshold. That is to say, $r_i' = r_j''$.

    Considering Eq.\eqref{eq:CIM_contribute}, we know for any $q > r_i' =r_j''$, the expected utility for $i$ to contribute with $q$ in (b) equals the expected utility for $j$ to contribute with the same $q$ in (c) (i.e. $\pi_{i}'(q) = \pi_{j}''(q)$). And for any $q \leq r_i' = r_j''$, neither $i'$ nor $j''$ contributes.
    The former facts show that, if $i$ has no incentive to deviate from (a) to (b), then $j$ has no incentive to deviate from (a) to (c) either.

{
\begin{figure}[h]
    \centering
    \begin{subfigure}
        \centering
        \begin{tikzpicture}[thick, global scale=0.75]
            \node (p) at (0,0)[circle,draw,minimum size=6mm]{};
		    \node (1) at (-1,-0.8)[circle,draw,minimum size=6mm]{}
		    edge[-] (p);
    		\node (2) at (-1.5,-1.6)[circle,draw,minimum size=6mm]{}
    		edge[-] (1);
    		\node (i) at (-0.5,-1.6)[circle,draw,minimum size=6mm]{}
    		edge[-] (1);
    		\node (4) at (-0.0,-2.4)[circle,draw,minimum size=6mm]{}
    		edge[-] (i);
    		\node (5) at (-0.0,-3.2)[circle,draw,minimum size=6mm]{}
    		edge[-] (4);
    		\node (j) at (1.0,-0.8)[circle,draw,minimum size=6mm]{}
    		edge[-] (p);
    		\node (7) at (0.5,-1.6)[circle,draw,minimum size=6mm]{}
    		edge[-] (j);
    		\node (8) at (1.5,-1.6)[circle,draw,minimum size=6mm]{}
    		edge[-] (j);
    		\node (9) at (1.5,-2.4)[circle,draw,minimum size=6mm]{}
    		edge[-] (8);
    		\node (10) at (0.0,0.0){\large $p$};
    		\node (11) at (-0.5,-1.6){\large $i$};
    		\node (12) at (1.0,-0.8){\large $j$};
    		\node (14) at (0,-2.4){$d_{i2}$};
    		\node (15) at (0,-3.2){$d_{i3}$};
    		\node (16) at (0.5,-1.6){$d_{j1}$};
    		\node (17) at (1.5,-1.6){$d_{j2}$};
    		\node (18) at (1.5,-2.4){$d_{j3}$};
			\node (k) at (-1, -0.8){\large $k$};
    		\node (h) at (-1.5, -1.6){\large $h$};
    		\node (20) at (0, -3.9){\large (d)};
		\end{tikzpicture}
    \end{subfigure}
    % \quad
    \begin{subfigure}
        \centering
        \begin{tikzpicture}[thick, global scale=0.75]
            \node (p) at (0,0)[circle,draw,minimum size=6mm]{};
		    \node (1) at (-1,-0.8)[circle,draw,minimum size=6mm]{}
		    edge[-] (p);
    		\node (2) at (-1.5,-1.6)[circle,draw,minimum size=6mm]{}
    		edge[-] (1);
    		\node (i) at (-0.5,-1.6)[circle,draw,minimum size=6mm]{}
    		edge[-] (1);
    		\node (j) at (1.0,-0.8)[circle,draw,minimum size=6mm]{}
    		edge[-] (p);
    		\node (7) at (0.5,-1.6)[circle,draw,minimum size=6mm]{}
    		edge[-] (j);
    		\node (8) at (1.5,-1.6)[circle,draw,minimum size=6mm]{}
    		edge[-] (j);
    		\node (9) at (1.5,-2.4)[circle,draw,minimum size=6mm]{}
    		edge[-] (8);
    		\node (x) at (-1.0,-2.4)[circle,draw,minimum size=6mm]{}
    		edge[-] (i);
    		\node (10) at (0.0,0.0){\large $p$};
    		\node (11) at (-0.5,-1.6){\large $i$};
    		\node (12) at (1.0,-0.8){\large $j$};
    		\node (13) at (-1.0,-2.4){$d_{i1}$};
    		\node (16) at (0.5,-1.6){$d_{j1}$};
    		\node (17) at (1.5,-1.6){$d_{j2}$};
    		\node (18) at (1.5,-2.4){$d_{j3}$};
			\node (k) at (-1, -0.8){\large $k$};
    		\node (h) at (-1.5, -1.6){\large $h$};
    		\node (20) at (0, -3.9){\large (e)};
        \end{tikzpicture}
    \end{subfigure}
    \begin{subfigure}
        \centering
        \begin{tikzpicture}[thick, global scale=0.75]
            \node (p) at (0,0)[circle,draw,minimum size=6mm]{};
		    \node (1) at (-1,-0.8)[circle,draw,minimum size=6mm]{}
		    edge[-] (p);
    		\node (2) at (-1.5,-1.6)[circle,draw,minimum size=6mm]{}
    		edge[-] (1);
    		\node (i) at (-0.5,-1.6)[circle,draw,minimum size=6mm]{}
    		edge[-] (1);
    		\node (j) at (1.0,-0.8)[circle,draw,minimum size=6mm]{}
    		edge[-] (p);
    		\node (7) at (0.5,-1.6)[circle,draw,minimum size=6mm]{}
    		edge[-] (j);
    		\node (8) at (1.5,-1.6)[circle,draw,minimum size=6mm]{}
    		edge[-] (j);
    		\node (9) at (1.5,-2.4)[circle,draw,minimum size=6mm]{}
    		edge[-] (8);
    		\node (10) at (0.0,0.0){\large $p$};
    		\node (11) at (-0.5,-1.6){\large $i$};
    		\node (12) at (1.0,-0.8){\large $j$};
    		\node (16) at (0.5,-1.6){$d_{j1}$};
    		\node (17) at (1.5,-1.6){$d_{j2}$};
    		\node (18) at (1.5,-2.4){$d_{j3}$};
			\node (k) at (-1, -0.8){\large $k$};
    		\node (h) at (-1.5, -1.6){\large $h$};
    		\node (20) at (0, -3.9){\large (f)};
        \end{tikzpicture}
    \end{subfigure}
    \caption{Three  ranking trees if $i$ in Fig. \ref{app_fig:similar trees}(a) deviates from ``inviting all''.}
    \label{app_fig:deviation trees}
\end{figure}
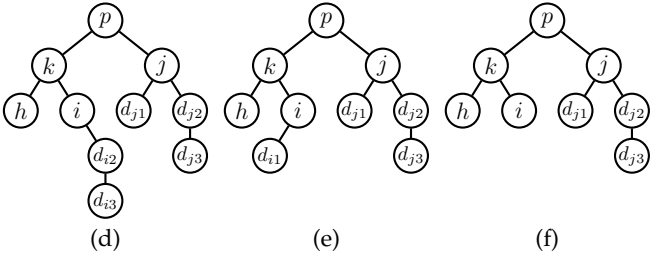}

Take the case in Fig. \ref{app_fig:similar trees}(a) as an example to illustrate the above common changes.
 
    Suppose Fig. \ref{app_fig:similar trees}(a) is the ranking tree when all agents invite all their neighbors.
    Under this condition, $i$ leads $d_{i1}$, $d_{i2}$ and $d_{i3}$; the direct descendants are $d_{i1}$ and $d_{i2}$.

    There are three possible conditions if $i$ deviates from ``inviting all'': (1) $d_{i1}$ is excluded and $i$ leads $d_{i2}$ and $d_{i3}$ (as Fig. \ref{app_fig:deviation trees}(d) shows); (2) $d_{i2}$ is excluded, thus $d_{i3}$ also being excluded and $i$ leads $d_{i1}$ only (as Fig. \ref{app_fig:deviation trees}(e) shows); (3) $d_{i1}$, $d_{i2}$ and $d_{i3}$ are all excluded (as Fig. \ref{app_fig:deviation trees}(f) shows).

    We ran the equilibrium verification on the ranking tree in Fig. \ref{app_fig:similar trees}(a) by calculating the equilibrium thresholds on the ranking trees in Fig. \ref{app_fig:similar trees}(a) and Fig. \ref{app_fig:deviation trees}(d-f).
    The following Fig. \ref{app_fig:expected utility case}(a) shows these equilibrium thresholds when the prize $\mathcal{M} = 1$, the contributing cost $c = 0.1$, and agents' qualities are independently and identically distributed on the exponential distribution $E(\lambda=1)$. Correspondingly, Fig. \ref{app_fig:expected utility case}(b) shows the expected utilities for agent $i$.

    From Fig. \ref{app_fig:expected utility case}(a), we know that when $i$ invites all her neighbors, among all $i$'s competitors, $d_{j1}, d_{j2}, d_{j3}, h$ have higher thresholds than $i$; $k, j$ have thresholds equal to or lower than $i$. After $i$'s deviating, $i$'s threshold increases, thresholds of $d_{j1}, d_{j2}, d_{j3}, h$ decrease, and thresholds of $k, j$ are still equal lower than that of $i$.
 
\begin{figure}[hb]
    \centering
    \subfigure[Thresholds of $i$ and her competitors under 4 equilibria]{
    \centering
    \includegraphics[width=0.45\textwidth]{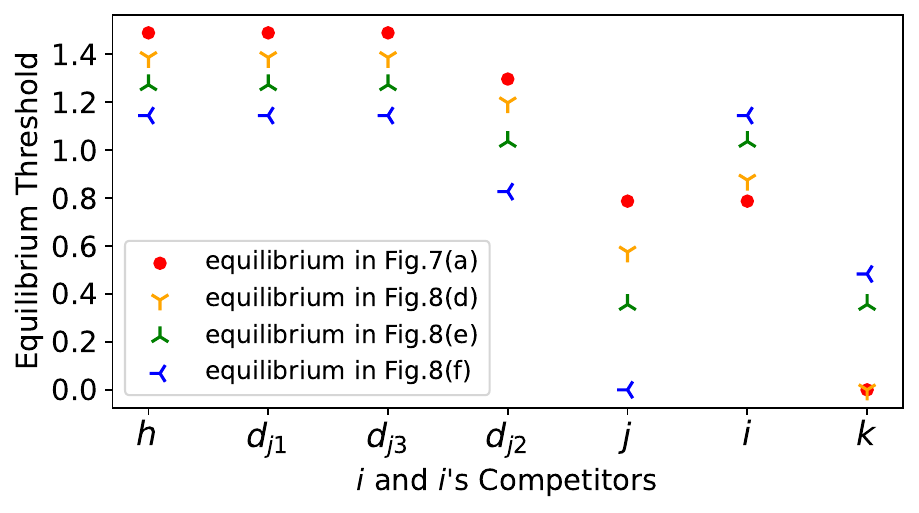}}
    \quad
    \subfigure[$i$'s expected utility under different intrinsic quality]{
    \centering
    \includegraphics[width=0.46\textwidth]{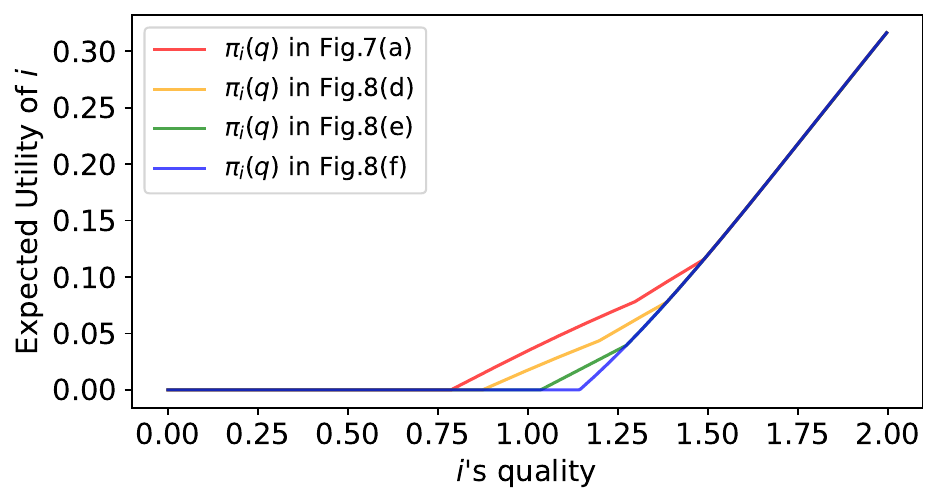}}
    % \caption{In the case shown in Figs.\ref{app_fig:similar trees}(a), $i$'s different invitations may lead to different ranking trees shown in Fig. \ref{app_fig:deviation trees}(d-f). Sub-figures (a) and (b) illustrate the equilibrium information one these four ranking trees under the assumption that $\mathcal{M}=1, c=0.1$ and $q \sim E(\lambda=1)$.}
   \caption{Thresholds and  expected utilities in equilibria for the four ranking trees under the assumption that $\mathcal{M}=1, c=0.1$ and $q \sim E(\lambda=1)$.}
    \label{app_fig:expected utility case}
\end{figure}

% Can use something like this to put references on a page
% by themselves when using endfloat and the captionsoff option.

% if you will not have a photo at all:
% \begin{IEEEbiographynophoto}{John Doe}
% Biography text here.
% \end{IEEEbiographynophoto}

% insert where needed to balance the two columns on the last page with
% biographies
%\newpage

% \begin{IEEEbiographynophoto}{Jane Doe}
% Biography text here.
% \end{IEEEbiographynophoto}

% that's all folks

\subsection*{Acknowledgement}
This work was supported by the National Natural Science Foundation of China (NNSFC) under Grant No. 71601029.

\end{document}